\documentclass[12pt]{alt2023} 


\title[Linear RL with Ball Structure Action Space]{Linear Reinforcement Learning with Ball Structure Action Space}
\usepackage{times}
\usepackage{amsmath, amsfonts, amssymb}
\usepackage{algorithm, algpseudocode}




\newcommand{\mS}{\mathcal{S}}
\newcommand{\mA}{\mathcal{A}}

\newcommand{\EE}{\mathbb{E}}
\newcommand{\RR}{\mathbb{R}}
\newcommand{\argmax}{\mathop{\arg\max}}

\makeatletter
\newcommand\footnoteref[1]{\protected@xdef\@thefnmark{\ref{#1}}\@footnotemark}
\makeatother

\newtheorem{assumption}{Assumption}

\altauthor{
 \Name{Zeyu Jia}\footnote{This work was done while Zeyu interned at Amazon.\label{refnote}} \Email{zyjia@mit.com}\\
 \addr Massachusetts Institute of Technology
 \AND
 \Name{Randy Jia} \Email{randyjia@amazon.com}\\
 \addr Amazon
 \AND
 \Name{Dhruv Madeka} \Email{maded@amazon.com}\\
 \addr Amazon
 \AND
 \Name{Dean P. Foster} \Email{foster@amazon.com}\\
 \addr Amazon
}

\begin{document}

\maketitle

\begin{abstract}%
	We study the problem of Reinforcement Learning (RL) with linear function approximation, i.e. assuming the optimal action-value function is linear in a known $d$-dimensional feature mapping. Unfortunately, however, based on only this assumption, the worst case sample complexity has been shown to be exponential, even under a generative model. Instead of making further assumptions on the MDP or value functions, we assume that our action space is such that there always exist playable actions to explore any direction of the feature space. We formalize this assumption as a ``ball structure'' action space, and show that being able to freely explore the feature space allows for efficient RL. In particular, we propose a sample-efficient RL algorithm (BallRL) that learns an $\epsilon$-optimal policy using only $\tilde{\mathcal{O}}\left(\frac{H^5d^3}{\epsilon^3}\right)$ number of trajectories.
\end{abstract}

\begin{keywords}%
	Markov Decision Process, Reinforcement Learning
\end{keywords}

\section{Introduction}
\par Reinforcement Learning (RL) is a well-studied framework for sequential decision making that has been successfully applied to real-world problems in fields such as game-play (Atari, AlphaGo, Starcraft), robotics, operations management, and more~\citep{mnih2013playing, silver2016mastering, vinyals2017starcraft, kober2013reinforcement}. However, many of the existing theoretical results can not be applied to many practical applications due to intractably-large number of states and/or actions. A common modeling assumption to address this issue is to assume the existence of a known feature mapping that maps state and action to a $d$-dimensional feature vector, and that either the underlying MDP dynamics or value functions is linear in this feature mapping. In this work, we consider the common setting where the optimal action-value function (or $Q^*$-function) is linear and can be written as the inner product of the feature mapping of state-action pairs and some unknown parameter vector. The primary goal is to determine whether there exists algorithms that can achieve a near-optimal policy using an efficient number of samples. Here, efficient sample complexity refers to a polynomial number of samples with respect to the feature dimension $d$, the horizon $H$, and the size of the action set $|\mA|$.

\par This setting has garnered much attention recently, however, in the general case pessimistic results have been shown in~\citet{weisz2021exponential, weisz2022tensorplan, du2019good, wang2021exponential}, which indicate that this problem is exponentially hard in the horizon $H$ or the size of the action set $|\mA|$. Furthermore, this pessimistic result is true even with access to a generative model that allows for arbitrary state ``resets.'' Recently, several works have made further assumptions on the MDP that allow for efficient learning when the $Q^*$ function is linear~\citep{jin2020provably, amortila2022few}. These typically include additional assumptions on the transition or reward model, or access to additional side information such as expert queries. However, many of these assumptions are restrictive, unrealistic, or unfeasible for many practical use cases, since in the real world, we typically do not we have well-behaved transition models or access to expert oracles.

\par We seek a general yet practical assumption that is novel, realistic, and amenable to efficient learning. Our work is motivated by the observation that, in some difficult real-world RL applications such as game-play and operations management, it may be easier to think of actions (or consecutive actions) in feature space rather than state space. For example, in a typical dungeon-survival game with various tasks such as fighting monsters, eating food, or searching for treasure, the feature space could include combat statistics, health, and special items. Instead of actions consisting of low-level controls (e.g. movement, engage, run, etc.), we would consider higher-level ``feature space'' actions (e.g. fight monster, eat food, dig for treasure). Now, we conjecture that if a learning algorithm is always able to play actions to property explore the feature space, then, combined with a $d$-dimensional feature mapper that exists in the case of linear RL, it should be able to learn a near-optimal policy efficiently. In order to mathematically characterize this property, we introduce the concept of a ``ball structure'' action space. This assumes that our action space always lies within a $d$-dimensional ball of radius $\rho$, so that every direction of the feature space has a corresponding action that can be taken, and therefore at any time step, we are able to explore in any direction of the feature space. However, a perfect ball-shaped action space may be somewhat unrealistic, therefore, we allow some flexibility on the degree of exploration in each direction by considering less-restrictive settings, such as when the action is instead contained within a convex set, or when the radius of the ball is allowed to differ from one time step to the next. 

\par Our main result is the BallRL (pronounced \emph{baller}) algorithm that leverages the exploration capabilities of the ball structure assumption and achieves sample-efficient bounds on learning. The results hold under very mild trajectory learning/PAC learning setting, i.e. we do not assume have access to the action space, and each sampled trajectory gives us information only about action sets along the trajectory, together with total rewards. The algorithm takes advantage of the ball structure action space for exploration, which can be shown to be efficient using the closed form solution of the optimal Bellman Equation, and enjoys a sample complexity bound of $\tilde{\mathcal{O}}(H^5d^3/\epsilon^2)$ for an $\epsilon$-optimal policy. Furthermore, a similar algorithm and complexity bound hold in the case of convex action set instead of a ball action set, under additional mild assumptions. All together, our results show that with a ball structure action set, we can achieve an exponential improvement in comparison to algorithms to linear $Q^*$ problem without the ball structure assumption. We also demonstrate that our algorithm is easy to implement and is computationally efficient as well.

\subsection{Organization of the Paper}
\par The rest of the paper is structured as follows: in Section \ref{sec-background} and \ref{sec-related} we will introduce the problem setting and review prior work in the literature for the linear RL problem. In Section \ref{sec-alg} we present our learning algorithm where we demonstrate that efficient learning is possible assuming a ball structure action set. In particular, we present two special generalizations of the assumption: in Section \ref{sec-warm} consider when every state in step $h$ shares the same convex action set, and in Section \ref{sec-different} we assume the ball structure action set is allowed to vary by state. Note that the simpler ball structure assumption is a special case of both settings. We finally conclude in Section \ref{sec-summary} with some discussion.

\subsection{Notations}
\par We will use $\langle\cdot, \cdot\rangle$ and $\|\cdot\|_2$ to denote the inner product and the 2-norm in $\RR^d$, respectively. Let $B_2(\rho) = \left\{x\in\RR^d|\|x\|_2\le \rho\right\}$ represent the $L_2$-ball of radius $\rho$ in $\RR^d$. The expectation $\mathbb{E}^\pi$ will denote the expectations over all trajectories obtained according to $\pi$ and the underlying transition models and reward functions. We also follow standard big-Oh notation, that is, we will write $A = \mathcal{O}(B)$ if there exists some positive constant $c$ such that $A\le cB$, and write $A = \tilde{\mathcal{O}}(B)$ if there exists some $c = \mathrm{polylog}(d, H, \delta, 1/\epsilon)$ such that $A\le cB$. Here, $d$ is the dimension of the feature space, $H$ the time horizon of an episode, $\delta$ the high probability parameter, and $\epsilon$ the  near-optimality parameter of the learned policy.

\section{Background}\label{sec-background}
\subsection{Preliminaries}
\par A Markov Decision Process~\citep{sutton2018reinforcement, puterman2014markov} is a well-known model of the typical reinforcement learning environment. We consider finite-horizon MDPs which are defined by the tuple $\mathcal{M} = (\mS, \mA, P, H,  r, \mu)$, where the horizon $H\in \mathbb{N}$ and the state space $\mS = \mS_1\cup\cdots\cup \mS_H$ is known to the learner, but the action space $\mA(s)$ of each state $s\in\mS$, the transition model $P: \mS\times\mA\to \mS$, the reward function $r: \mS\to \RR$, and initial state distribution $\mu$ are not known. To avoid confusion, without loss of generality we assume that $\mS_1, \cdots, \mS_H$ have no intersection between each other.
\par For a given MDP, a policy $\pi: \mS \to \mA$ is a mapping from state space to the action space, where $\pi(s)\in\mA(s)$ for all $s\in\mS$. For a given policy $\pi$, we define its value functions ($V$) and $Q$ functions according to the following iterative equations:
\begin{align*}
	V_{H+1}^\pi(s_{h+1}) & = 0, \quad Q_{H+1}(s_{H+1}, a_{H+1}) = 0, \quad \forall s_{H+1}, a_{H+1},\\
	Q_h^\pi(s_h, a_h) & = r(s_h, a_h) + \sum_{s_{h+1}} P(s_{h+1}|s_h, a_h)V_{h+1}^\pi(s_{h+1}),\\
	V_h^\pi(s_h) & = Q_h^\pi(s_h, \pi(s_h)).
\end{align*}
We further define the optimal $Q$ and $V$ function as:
\[ Q_h^*(s_h, a_h) = \max_\pi Q_h^\pi(s_h, a_h), \quad V_h^*(s_h) = \max_\pi V_h^\pi(s_h).\]
For the optimal $Q^*$ function we have the optimal Bellman Equations, so that for all $1\le h\le H, s_h\in\mS_h, a_h\in \mA(s_h)$,
\begin{equation}\label{eq-bellman}
Q_h^*(s_h, a_h) = r(s_h, a_h) + \sum_{s_{h+1}} P(s_{h+1}|s_h, a_h)\max_{a_{h+1}}Q_{h+1}^*(s_{h+1}, a_{h+1}).\end{equation}
A typical reinforcement learning problem objective is to determine an algorithm that recovers a policy $\pi$ that performs well relative to the unknown optimal policy $\pi^*$; performance is generally defined by comparing the learned policy's and optimal policy's value functions. In the following section we detail our specific problem setting and objective.

\subsection{Our problem setting}
\par Due to the intractability of dealing with extremely high-dimensional state spaces, we make the standard assumption of a linear $Q^*$ function for our problem; that is, the optimal $Q^*$ function is linear in a $d$-dimensional feature mapping of the state and action:
\begin{assumption}[Non-Stationary Linear $Q^*$ Assumption]\label{ass-transition}
	For each state-action pair $(s, a)$, there exists a feature vector $\varphi(s, a)\in \RR^d$. There are also $H$ unknown parameters $\theta_1^*, \cdots, \theta_H^*$, such that the $Q^*$-function of state-action pairs has the following parametrization
	\[Q_h^*(s_h, a_h) = \langle \varphi(s_h, a_h), \theta_h^*\rangle, \quad \forall 1\le h\le H, s_h\in\mS_h, a_h\in \mA(s_h).\]
\end{assumption}
While Assumption~\ref{ass-transition} appears to be a very strong statement on the optimal $Q$ function, it is known that by itself, the assumption is not enough to guarantee efficient learning. Therefore, we present our ball structure assumption that we will show will allow for sample-efficient RL under the linear $Q^*$ assumption.
\begin{assumption}[Ball Structure Action Set]\label{ass-ball}
	Define the $L_2$ ball with radius $\rho > 0$ as
	\[B_2(\rho)\triangleq \left\{x\in\RR^d\big|\|x\|_2\le \rho\right\}.\]
	For each state $s$, there a feature vector $\varphi(s)\in \RR^d$ and a positive number $\rho(s)$ such that
	\[\left\{\varphi(s, a)|a\in\mA_s\right\} = \varphi(s) + B_2(\rho(s)).\]
\end{assumption}

\begin{remark}
	Without loss of generality, we can assume that
	$$\mA(s) = B_2(\rho_h(s))\triangleq \left\{a\in\RR^d\big|\|a\|_2\le \rho_h(s)\right\},$$
	and also
	$$\varphi(s, a) = \varphi(s) + a.$$
This is because if there exists two actions $a_1, a_2$ such that $\varphi(s, a_1) = \varphi(s, a_2)$, then Assumption~\ref{ass-transition} implies that $Q^*(s, a_1) = Q^*(s, a_2)$. Hence if we remove $a_2$ from the action set, the value of $V^*(s_1)$ will remain the same. Therefore, if we remove these redundant actions and find a near-optimal policy of the MDP, this policy must also be a near-optimal policy of the original MDP.
	\par By above, after removing redundant actions, we can assume that $a\to \varphi(s, a)$ is an injection, meaning  $a\to \varphi(s, a)$ is a one-to-one mapping from $\mA(s)$ to $B_2(\rho(s)) + \varphi(s)$. Hence we can replace every action $a$ with $\varphi(s, a) - \varphi(s)$, and  then we will have property $\varphi(s, a) = \varphi(s) + a$. Thus, without loss of generality, in the rest of the paper, we will assume $\varphi(s, a) = \varphi(s) + a$ always holds.
\end{remark}

\par Because the transition model and reward function are unknown at the beginning, the learner will only be able to access samples, or realizations, of them by directly interacting with the environment. That is, the learner must execute a policy to actually observe the outcome of those actions. We will consider the following \emph{trajectory learning} setting:

\begin{definition}[Trajectory Learning]\label{def-trajectory}
At every iteration, the learner first picks a policy (a function mapping every state $s\in\mS$ to some action in $\mA(s)$), and then a trajectory $(s_1, s_2, \cdots, s_H)$ is sampled according to the true underlying MDP. Only the following two pieces of information are revealed to the learner:
\begin{enumerate}
	\item $\mA(s_h)$: The action sets of each state in the trajectory;
	\item $\sum_{h=1}^H R(s_h, a_h)$: The sum of total reward of the trajectory, where $R(s_h, a_h)$ denotes the instant reward obtained by taking action $a_h$ at state $s_h$, which satisfies $\EE[R(s_h, a_h)] = r(s_h, a_h)$.
\end{enumerate}
\end{definition}
\begin{remark}
Note that our trajectory learning setting is weaker than the standard PAC learning setting in the literature, where it is assumed that all the information of the trajectories is revealed, including the states $s_1, \cdots, s_H$ and the instantaneous rewards $R(s_h, a_h)$. Our algorithm also does not require the use of a generative model that is standard in some linear $Q^*$ works. Therefore, our algorithm applies to both the common PAC learning setting and generative model setting. For more information about this, please refer to Section \ref{sec-related}.
\end{remark}

\par Finally, in order to measure the performance of our learner's policy, we define the closeness to optimality of a policy via the standard notion of an $\epsilon$-optimal policy:
\begin{definition}[$\epsilon$-optimal policy]
	If a policy $\pi$ satisfies 
	\[|V^\pi(s_0) - V^*(s_0)|\le \epsilon,\]
	then we call the policy $\pi$ an $\epsilon$-optimal policy. Here, $V^{\pi}$ is the value function with respect to following the policy $\pi$, and $V^*$ is the value function of the true optimal policy.
\end{definition}
\par Our objective in this work is to develop an algorithm which can find an $\epsilon$-optimal policy with high probability, by using a polynomial number (in $d, H$ and $1/\epsilon$) of trajectory learning iterations.

\section{Related Literature}\label{sec-related}
\par The linear $Q^*$ problem is one of the simplest and most intuitive ways to describe reinforcement learning with parametrization. Many works have studied this setting of RL with the goal to develop a sample-efficient algorithm to learn a near-optimal policy. However, in the most general case, recent work has yielded only pessimistic results related to this problem. In~\cite{weisz2021exponential, weisz2022tensorplan, du2019good, wang2021exponential, foster2021statistical}, the linear $Q^*$ problem has been shown to be exponentially hard in $d$ or $H$ or $|\mA|$, even when the number of actions are small. Their main idea revolves around showing a lower exponential bound by constructing a needle in haystack-type MDP, i.e., among exponentially many actions there is only one action that induces rewards, hence in order to find the optimal action the learner must run policies an exponential number of times. Additionally, they also adopt the Johnson-Lindenstrauss lemma to show that they can choose these actions such that every two actions are sufficiently far away from each other, so that querying non-optimal actions gives limited information of the optimal action.

\par Apart from pessimistic results, there are many works which demonstrate that the linear $Q^*$ problem is polynomially solvable with added additional assumptions. Assumptions are quite varied and numerous, and we attempt to give an overview of the different types that have allowed for efficient learning. If for all policies $\pi$, the $Q$-function $Q^\pi$ can be linearly parameterized, then the problem is polynomially solvable by using approximate policy iteration~\citep{lattimore2020learning}. If both the transition model and reward function are deterministic, then the problem is polynomially solvable by eliminating functions that does not satisfy the linear $Q^*$ function assumptions~\citep{wen2013efficient}. If a `core set' (that is, features of every state action pairs can be written as the convex combinations of features in the core set) exists for the MDP, then the problem is polynomially solvable~\citep{zanette2019limiting, shariff2020efficient}. In comparison to our assumption, our algorithm has access to an orthogonal basis at first, which is similar to the idea of core set. However, the core set cannot capture our setting, since the ball cannot be written as convex combination of basis vectors - simply adopting their algorithm would induce exponential sample complexity. Under the assumption that the action set is finite,  the TensorPlan Algorithm in ~\citet{weisz2021query} can obtain an $\epsilon$-optimal policy using $\mathrm{poly}\left(\left(\frac{dH}{\epsilon}\right)^{|\mA|}\right)$ number of samples. Alternatively, if we assume access to an expert oracle which gives the value of $Q^*(s, a)$ when queried at state $(s, a)$, the DELPHI algorithm can solve these linear $Q^*$ problem in polynomial time using no more than $\mathcal{O}(d)$ calls of expert queries ~\citep{amortila2022few}.

\par Beyond the linear $Q^*$ problem, there are also several works which achieve polynomial sample complexity under general assumptions of the MDP's underlying properties. If the transition model can be linearly parametrized, then the MDP problem becomes polynomially solvable as shown in~\cite{jin2020provably, yang2019sample, yang2020reinforcement, jia2020model}. However, a linear transition model is a fairly strong assumption and generally not a very practical assumption, as most systems do not behave as such. There are also works focused on generalized function approximations, e.g. Eluder Dimensions~\citep{ayoub2020model, wang2020reinforcement}, Bellman Rank~\citep{jiang2017contextual}, Bellman Eluder Dimension~\citep{jin2021bellman}, Bilinear Class~\citep{du2021bilinear}, Bellman Closeness~\citep{jin2021bellman, zanette2020learning}. However, again these assumptions on the models are either hard to verify in practice or generally do not occur in real world systems, which makes the use of these algorithms difficult to justify in practice.

\section{The BallRL Algorithm}\label{sec-alg}
We now present the main result of our paper, that is, an algorithm that achieves polynomial sample complexity in the linear $Q^*$ setting under the assumption of a ball structure action space. Before proceeding with the details, we highlight two versions of our algorithm, Convex-BallRL and DiffR-BallRL, both of which are essentially extensions of the standard ball structure assumption (Assumption \ref{ass-ball}). In the first case, we consider convex action sets where the action sets are identical across state. While every state necessarily has the same set of actions to take, the magnitude to which one can explore different directions is permitted to vary, so long the overall action set is convex. This can be seen as a slightly more realistic version of the standard ball assumption, as in practice it may be difficult to guarantee the magnitude of every feature direction to be the same. In the second case, we consider the standard ball structure action set but allow the action set to vary depending on the current state. The motivation behind these two slightly different settings is to represent a more realistic generalization of the original ball structure presented earlier, as in practical settings action spaces may not always be uniformly a perfect ball.

\subsection{Identical Convex Action Sets within One Step}\label{sec-warm}
\par In this section, we make the assumption that the action set $\mA(s_h)$ is identical for every $s_h\in\mS_h$, and moreover, we assume the action sets are regular convex sets, which is a generalization of the ball structure presented in Assumption \ref{ass-ball}. Intuitively, the action set is contained between a smaller radius and a larger radius ball.
\begin{definition}[Regular Convex Set]
	We call a set $\mathcal{M}\subset \mathbb{R}^d$ a regular convex set with parameter $B\ge 1$ if there exists $\eta \ge \rho > 0$ such that $\frac{\eta}{\rho} = B$ and
	$$B_2(\rho)\subset \mathcal{M}\subset B_2(\eta).$$
\end{definition}
\begin{remark}
	Regular convex sets include many different types of structures such as balls, cubes, ellipsoids, etc. Some specific examples include:
	\begin{enumerate}
		\item All balls are regular convex sets with parameter $1$;
		\item Cubes in $d$ dimension are regular convex sets with parameter $\sqrt{d}$;
		\item Ellipsoids are regular convex sets with parameter $\frac{a_{\max}}{a_{\min}}$, where $a_{\max}, a_{\min}$ are the longest and shortest axes.
	\end{enumerate}
\end{remark}

\par Let us formally characterize our assumption for the setting with convex action sets.
\begin{assumption}[Identical Convex Action Sets within One Step]\label{ass-convex}
	For every $1\le h\le H$, there exists a regular convex set $\mA_h$ with parameter $B$, such that for all $s_h\in\mS_h$, $\mA(s_h) = \mA_h$. Specifically, there exists $\rho_1, \cdots, \rho_H, \eta_1, \cdots, \eta_H$, such that for every $1\le h\le H$ we have 
	$$\frac{\eta_h}{\rho_h} = B, \qquad B_2(\rho_h)\subset \mA_h\subset B_2(\eta_h).$$
	Without loss of generality, we also assume that the features still satisfy $\varphi(s, a) = \varphi(s) + a$.
\end{assumption}
\par We develop an algorithm, Convex-BallRL, that works in the trajectory learning setting (Definition~\ref{def-trajectory}) under Assumption \ref{ass-transition} and \ref{ass-convex}, and is guaranteed to find an $\epsilon$-optimal policy using a polynomial number of trajectories.

\subsubsection{Intuition and key ideas}
\par Before presenting the algorithm itself, we provide some intuition on the key ideas behind our algorithm. With loss of generality, we assume that we know the value of $\rho_1, \cdots, \rho_H$ at the beginning. Otherwise, we can run one trajectory according to any policy, then all the action sets $\mA_1, \cdots, \mA_H$ will be revealed to us, from which we can determine the values of $\rho_1, \cdots, \rho_H$.
\par We start by observing the following equation due to telescoping of Bellman Equation \eqref{eq-bellman}:
\begin{equation}\label{eq-warm}\mathbb{E}\left[\langle\varphi(s_1), \theta_1^*\rangle\right] + \mathbb{E}\left[\sum_{h=1}^H\langle a_h, \theta_h^*\rangle\right] = \sum_{h=1}^H\mathbb{E}[R(s_h, a_h)] + \sum_{h=1}^H \rho_{h+1}\max_{a_{h+1}\in\mathcal{A}_{h+1}}\langle a_{h+1}, \theta_{h+1}^*\rangle.\end{equation}
\par Our next observation is that the first term of LHS and the second term of RHS in \eqref{eq-warm} are identical for every policy. Hence, if we compare \eqref{eq-warm} between two different policies, we can obtain information of $\theta_h^*$ (according to $\langle a_h, \theta_h^*\rangle$) based on the first term in RHS, which can be estimated through sampled trajectories. Formally, we choose $\pi_0$ to be the all-zero policy: 
\begin{equation}\label{eq-pi0}\pi_0(s_h) = \mathbf{0}\in\RR^d, \quad \forall 1\le h\le H,\end{equation}
and $\pi_{h, i}$ ($1\le h\le H, 1\le i\le d$) to be the following policy: for $1\le h'\le H$ and $s_{h'}\in \mS_{h'}$,
\begin{equation}\label{eq-pihi}\pi_{h, i}(s_{h'}) = \begin{cases} \mathbf{0} &\quad \text{if }h'\neq h,\\ \rho_h\mathbf{e}_i&\quad \text{if } h' = h\end{cases},\end{equation}
where $\mathbf{e}_i$ is the $i$-th basis vector in $\RR^d$. Comparing \eqref{eq-warm} according to policy $\pi_0$ and also policy $\pi_{h, i}$, we obtain that
$$\rho_h\langle \mathbf{e}_i, \theta_h^*\rangle = \mathbf{E}^{\pi_{h, i}}\left[\sum_{h=1}^H R(s_h, a_h)\right] - \mathbf{E}^{\pi_{0}}\left[\sum_{h=1}^H R(s_h, a_h)\right].$$
The right hand side can be estimated according to trajectories from policy $\pi_0$ and $\pi_{h, i}$, which leads to the estimate of $i$-th component of $\theta_h^*$.
\par Finally after getting accurate enough estimations $\hat{\theta}_h\in \RR^d$ on $\theta_h^*$, we adopt the greedy policy, i.e.
\begin{equation}\label{eq-policy}\pi(s_h) = \argmax_{a_h\in \mA_h} \langle a_h, \hat{\theta}_h\rangle,\end{equation}
and then show that this policy is a nearly optimal policy. 

\subsubsection{Algorithm and Sample Complexity}
\par The pseudocode for BallRL with convex action sets is given in Algorithm \ref{alg-convex}. The main result of this section is the following theorem about its sample complexity, in particular, that it has polynomial sample complexity. The complete proof details are provided in Appendix \ref{app-convex}.
\begin{theorem}\label{thm-convex}
	For any $\delta > 0$, if we choose 
	$$M = \frac{8H^2B^2d\log(2dH/\delta)}{\epsilon^2},$$
	then with probability at least $1 - \delta$, the output policy from the above algorithm is an $\epsilon$-optimal policy. The total number of trajectories used in this algorithm is
	$$\frac{16H^3B^2d^2\log(2dH/\delta)}{\epsilon^2}.$$
\end{theorem}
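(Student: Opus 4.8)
The plan is to follow the three stages sketched in the intuition: (i) turn the telescoped Bellman equation into an \emph{exact} coordinate-wise identity for $\theta_h^*$, (ii) estimate the two expected total rewards appearing in that identity by Monte Carlo and control all estimation errors uniformly, and (iii) convert the resulting bound on $\|\hat\theta_h-\theta_h^*\|_2$ into a suboptimality bound for the greedy policy \eqref{eq-policy} via a performance-difference argument.

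\textbf{Step 1 (exact identity).} Using Assumption~\ref{ass-transition} together with the normalization $\varphi(s,a)=\varphi(s)+a$, for $h<H$ we have $V_{h+1}^*(s_{h+1})=\langle\varphi(s_{h+1}),\theta_{h+1}^*\rangle+c_{h+1}$ where $c_{h+1}:=\max_{a\in\mA_{h+1}}\langle a,\theta_{h+1}^*\rangle$ is a \emph{constant}, which is exactly where Assumption~\ref{ass-convex} (identical action set within a step) is used. Plugging this into \eqref{eq-bellman}, applying $\EE^\pi$, summing over $h$ and telescoping the $\langle\varphi(s_{h+1}),\cdot\rangle$ terms gives \eqref{eq-warm}. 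In \eqref{eq-warm} the term $\EE^\pi[\langle\varphi(s_1),\theta_1^*\rangle]$ depends only on $\mu$ and $\sum_h c_{h+1}$ does not depend on $\pi$ at all; moreover $\pi_0$ and $\pi_{h,i}$ play a fixed action at each step regardless of the state, so $\EE[\sum_{h'}\langle a_{h'},\theta_{h'}^*\rangle]$ equals $0$ under $\pi_0$ and $\rho_h\langle\mathbf{e}_i,\theta_h^*\rangle$ under $\pi_{h,i}$. Subtracting \eqref{eq-warm} for the two policies therefore yields the exact formula $\rho_h\langle\mathbf{e}_i,\theta_h^*\rangle=\EE^{\pi_{h,i}}[\sum_{h'}R(s_{h'},a_{h'})]-\EE^{\pi_0}[\sum_{h'}R(s_{h'},a_{h'})]$.

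\textbf{Step 2 (concentration) and Step 3 (policy bound).} For each of the $1+Hd$ policies we run $M$ independent trajectories and average the observed total rewards; by Hoeffding's inequality (using the standard boundedness of the trajectory return) and a union bound over the $1+Hd$ means, with probability at least $1-\delta$ every empirical mean is within $\beta:=\sqrt{\log(2dH/\delta)/(2M)}$ of its expectation. Setting $\hat\theta_h^{(i)}:=\rho_h^{-1}(\hat\mu_{h,i}-\hat\mu_0)$ as in the identity above gives $|\hat\theta_h^{(i)}-\theta_h^{*(i)}|\le 2\beta/\rho_h$, hence $\|\hat\theta_h-\theta_h^*\|_2\le 2\sqrt d\,\beta/\rho_h$, simultaneously for all $h,i$. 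Let $\pi$ be the greedy policy \eqref{eq-policy}. By the performance-difference identity $V^*(s_1)-V^\pi(s_1)=\sum_{h=1}^H\EE^\pi\big[\max_{a\in\mA_h}Q_h^*(s_h,a)-Q_h^*(s_h,\pi(s_h))\big]$, it suffices to bound each summand: writing $a_h^*$ for a maximizer of $\langle\cdot,\theta_h^*\rangle$ over $\mA_h$ and $\hat a_h=\pi(s_h)$, add and subtract $\langle a_h^*,\hat\theta_h\rangle$ and $\langle\hat a_h,\hat\theta_h\rangle$; the $\hat\theta_h$-optimality of $\hat a_h$ removes one term, and Cauchy--Schwarz with $\|a_h^*\|_2,\|\hat a_h\|_2\le\eta_h=B\rho_h$ bounds the remainder by $2\eta_h\|\hat\theta_h-\theta_h^*\|_2\le 2B\rho_h\cdot 2\sqrt d\,\beta/\rho_h=4B\sqrt d\,\beta$. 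Summing over $h$ gives $V^*(s_1)-V^\pi(s_1)\le 4HB\sqrt d\,\beta$; plugging in $M=8H^2B^2d\log(2dH/\delta)/\epsilon^2$ makes this $\le\epsilon$, and the total sample count is $(1+Hd)M\le 16H^3B^2d^2\log(2dH/\delta)/\epsilon^2$.

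\textbf{Main obstacle.} The only genuinely delicate point is the cancellation of $\rho_h$ in Step~3: the coordinate estimates $\hat\theta_h$ degrade like $1/\rho_h$ as the inner radius shrinks, which naively would destroy the bound. The resolution, and the reason the statement carries no $\rho_h$-dependence, is that the only quantity that propagates into the suboptimality is $\langle a,\hat\theta_h\rangle$ for $a\in\mA_h\subset B_2(\eta_h)$, whose error is $\|a\|_2\le\eta_h=B\rho_h$ times the parameter error, so the factor $\eta_h$ exactly absorbs the $1/\rho_h$. One must also compare $a_h^*$ and $\hat a_h$ against the \emph{same} action set — legitimate precisely because $\mA_h$ is state-independent — and keep the $1+Hd$-way union bound inside the $\log(2dH/\delta)$ budget; everything else is routine.
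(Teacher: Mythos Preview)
Your proposal is correct and follows essentially the same route as the paper: telescoped Bellman identity, coordinate-wise Hoeffding with a $2dH$-way union bound, and the same add--subtract/Cauchy--Schwarz argument using $\|a\|_2\le\eta_h=B\rho_h$ to cancel the $1/\rho_h$ in the parameter error. The only cosmetic difference is Step~3: the paper does not invoke the generic performance-difference lemma but instead applies the telescoped identity \eqref{eq-difference} to both $\pi^*$ and $\pi$ directly (using $V^\pi(s_1)=\EE^\pi[\sum_h R]$ and the fact that both policies play a fixed, state-independent action at each step), which gives $V^*(s_1)-V^\pi(s_1)=\sum_h\big(\langle a_h^*,\theta_h^*\rangle-\langle a_h,\theta_h^*\rangle\big)$ in one line; your performance-difference argument yields the same bound with the same constants.
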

\begin{remark}
	If we assume all action sets have ball structure, then all action sets are regular convex sets with parameter $1$. Hence the above algorithm is guaranteed to find an $\epsilon$-optimal policy using $\tilde{\mathcal{O}}\left(\frac{H^3d^2}{\epsilon^2}\right)$ number of trajectories.
\end{remark}

\begin{algorithm}
	\caption{Convex-BallRL} \label{alg-convex}
	\begin{algorithmic}
		\State \textbf{Input: } $d, H, M, \rho_h$;
		
		\For{$h = 1:H$}{
			\For{$i = 1:d$}{
				\State Choose policy $\pi_0$ such that $\pi_0(s_{h'}) = \mathbf{0}$ for any $s_{h'}, h'\in [H]$.
				\State Choose policy $\pi_{h, i}$ such that $\pi_{h, i}(s_{h'}) = \mathbf{0}$ if $h'\neq h$ and $\pi_{h, i}(s_h) = \rho_h\mathbf{e}_i$ for all $s_h$.
				\State Get $M$ trajectories from policy $\pi_0$ and $\pi_{h, i}$ respectively, and calculate the average of total rewards: $R_0$ and $R_{h, i}$.
			}
			\State Let $\hat{\theta}_{h}$ to be the vector whose $i$-th component to be $\frac{R_{h, i} - R_0}{\rho_h}$.
			\State Let $a_h = \arg\max_{a\in\mathcal{A}_h} \langle a_h, \hat{\theta}_h\rangle.$
		}
		\State \textbf{Output: } Policy $\pi$: $\pi(s_h) = a_h$ for any $s_h, h\in [H]$.
	\end{algorithmic}
\end{algorithm}

\subsection{Different Radius}\label{sec-different}
\par In this section, we abandon the assumption that all states $s_h$ in step $h$ share identical action sets, and allow for the action set to vary depending on the state. However, we again assume that the action set corresponding to each state is a ball as in Assumption \ref{ass-ball}. We further assume that the norm of $\theta_h$ are all the same for $1\le h\le H$, and also that the norm of features, rewards and radius are bounded:
\begin{assumption}[Boundedness]\label{ass-bound}
	For each state $s\in\mS$, action $a\in\mA(s)$, we have
	$$\|\varphi(s, a)\|_2\le 1;$$
	For some $\Theta\in [0, 1]$, we have
	$$\|\theta_1\|_2 = \cdots = \|\theta_H\|_2 = \Theta;$$
	For every trajectories $(s_1, a_1, \cdots, s_H, a_H)$, we have
	$$0\le \sum_{h=1}^H R(s_h, a_h)\le 1,\quad 0\le \sum_{h=1}^H \rho(s_h)\le 1.$$
\end{assumption}
\par We again aim to develop an algorithm that works under Definition \ref{def-trajectory} (trajectory learning), but under Assumption \ref{ass-transition} (Linear $Q^*$ assumption), \ref{ass-ball} (Ball Structure Assumption) and \ref{ass-bound} (Boundedness Assumption).

\subsubsection{Intuition and key ideas}
\par We begin by presenting the following key ideas of our algorithm:
\paragraph{To Exploit the Ball Structure Action space} Similar to (\ref{eq-warm}) in Convex-BallRL, our algorithm is again based on the telescoping of Bellman Equation \eqref{eq-bellman}, which exploits the ball structure of the action space:
\begin{equation}\label{eq-bell}
\langle\varphi(s_1), \theta_1^*\rangle + \mathbb{E}^\pi\left[\sum_{h=1}^H\langle a_h, \theta_h^*\rangle\right] = \mathbb{E}^\pi\left[\sum_{h=1}^HR(s_h, a_h)\right] + \Theta\cdot \mathbb{E}^\pi\left[\sum_{h=1}^H \rho(s_{h+1})\right].\end{equation}

\paragraph{Estimation of Norm by Grid Search}
\par According to \eqref{eq-bell}, we can estimate $\theta_1^*, \cdots, \theta_H^*$ in the LHS based on the RHS. However, $\Theta$, which is the norm of the unknown parameters $\theta_1^*, \cdots, \theta_H^*$, is difficult to estimate. Hence in our algorithm, we adopt a grid search method for the value of $\Theta$: choosing $\xi = l\varepsilon$ for $1\le l\le \frac{1}{\varepsilon}$, so that at least one such $\xi$ is $\varepsilon$-close to the true $\Theta$. Therefore, if we develop our policy based on these $\xi$, then at least one policy will necessarily be an $\epsilon$-optimal policy.

\paragraph{Hierarchical Exploration} For the exploration in our algorithm, we will choose  actions to be $\rho(s_h)\mathbf{e}_i$ for $1\le i\le d$ in order to give information about the $i$-th component of $\theta_h^*$. However, one problem is that this estimation has accuracy at most $1/\rho(s_h)$, which will explode as $\rho(s_h)$ goes to zero. To deal with this problem, we consider a hierarchical exploration method: 
\par Suppose the policy we currently use for exploration is $\pi_e$, and the greedy policy we calculated is $\pi$. We can show that the exploration will guarantee $1/({\mathbb{E}^{\pi_e}[\rho(s_h)]}\sqrt{M})$ accuracy on $\theta_h^*$ (up to logarithmic factors), and hence the error of $\pi$ is $\mathbb{E}^{\pi}[\rho(s_h)]/({\mathbb{E}^{\pi_e}[\rho(s_h)]}\sqrt{M})$. Therefore, if for every $1\le h\le H$ we all have $\mathbb{E}^{\pi}[\rho(s_h)]\le 2{\mathbb{E}^{\pi_e}[\rho(s_h)]}$, then the error of the greedy policy is of order $2/\sqrt{M}$, which can be bounded by choosing some proper $M$. Otherwise, we use the greedy policy $\pi$ to construct another exploration policy as follows:
\begin{equation}\label{eq-pi}
	\begin{aligned}
		\pi_{h, 0}(s_{h'}) & = \begin{cases}
			\pi(s_{h'}) &\quad \text{if } 1\le h' < h;\\
			0 &\quad \text{if }h'\ge h;
		\end{cases}\\
		\pi_{h, i}(s_{h'}) & = \begin{cases}
			\pi(s_{h'}) &\quad \text{if } 1\le h' < h;\\
			\rho(s_h)\mathbf{e}_i &\quad \text{if }h' = h;\\
			0 &\quad \text{if }h'\ge h.
		\end{cases}
	\end{aligned}
\end{equation}
Then these new policies $\pi_{h, i}$ will guarantee that $\mathbb{E}^{\pi_{h, i}}[\rho(s_h)]\ge 2\mathbb{E}^{\pi_e}[\rho(s_h)]$, i.e. the value of $\mathbb{E}^{\pi_e}[\rho(s_h)]$ becomes at least twice of its previous value. Therefore, we can show that this process will end in at most $H\log(1/\varepsilon)$ number of times, provided that the initial value of $\mathbb{E}^{\pi_e}[\rho(s_h)]$ is at least $\varepsilon$. 

\paragraph{Ignore Small Radius} We will show that if within a policy $\pi$, the expected radius $\EE^\pi[\rho(s_h)]$ at step $h$ is smaller than $\varepsilon$, then the effect of different actions within this step can be ignored, and we do not need to carry out the above exploration in this step.

\subsubsection{Algorithm and Sample Complexity}
\par Combine these ideas together, we construct the following Algorithm \ref{alg-diff}.
\begin{algorithm}[H]
	\caption{DiffR-BallRL}
	\label{alg-diff}
	\begin{algorithmic}
		\State \textbf{Input: } $d, H, M_1, M_2, \varepsilon, \eta, L = 1/\eta$;
		\State Let $\rho_h = 0, \rho_h^l = 1$ for all $1\le h\le H, 1\le l\le L$.
		\State Let policy $\pi_l'$ to be the policy such that $\pi(s_h) = 0$ for all $s_h\in \mS_h$ and $1\le h\le H$ and $1\le l\le L$.
		
		\While{$\exists 1\le h\le H$ and $1\le l\le L$ such that $\rho_h^l\ge 2\rho_h$ and $\rho_h^l\ge \varepsilon$}{
		\State Fix $h, l$ to be the one that $\rho_h^l\ge \rho_h$, and construct Policy $\pi_{h, 0}$ and $\pi_{h, i}$ ($1\le i\le d$) based on policy $\pi_l'$ according to \eqref{eq-pi} (use $\pi = \pi_l'$ in \eqref{eq-pi}).
		\State Collect $M_1$ trajectories according to policy $\pi_{h, i}$ for $0\le i\le d$ each, and calculate the average of total reward $\sum_{h'=1}^H R(s_h, a_h)$, and average of $\sum_{h=2}^H \rho(s_h)$ as $R_{h, i}$ and $s_{h, i}$, respectively.
		\State $\rho_h\leftarrow \rho_h^l.$
		
		\For{$l=1:L$}{
			\State Let $\xi = l\eta$.
			\State Calculate $\hat{\theta}_{h, i}^l = \frac{(s_{h, i} - s_{h, 0})\xi + R_{h, i} - R_{h, 0}}{\rho_h}$ for $1\le i\le d$.
		}
		\State Let $\hat{\theta}_h^l = \sum_{i=1}^d\hat{\theta}_{h, i}^l \mathbf{e}_i$.
			\State Construct policy $\pi_l'$:
			$$\pi_l'(s_{h'}) = \argmax_{a_{h'}\in B_2(\rho(s_{h'}))} \left\langle a_{h'}, \hat{\theta}_{h'}^l\right\rangle, \quad \forall s_{h'}\in \mS_{h'}, 1\le h'\le H.$$
			\State For $1\le l\le L$, run policy $\pi_l'$ each for $M_2$ times, and calculate the average total reward $R_1, \cdots, R_L$, and also calculate the average of $\rho(s_{h'})$ as $\rho_{h'}^1, \cdots, \rho_{h'}^L$ for $1\le h'\le H$. 
			\State Let $l = \argmax R_l$ and $\pi = \pi'_l$.
		}
		\State \textbf{Output: }Policy $\pi$.
	\end{algorithmic}
\end{algorithm}

\par Finally, we arrive at our main result - that DiffR-BallRL is sample efficient. The proof details are provided in Appendix \ref{app-diff}.
\begin{theorem}\label{thm1}
	For any $0 < \delta < 1$, with the choice 
	\begin{align*}
		& \varepsilon = \frac{\epsilon}{8H}, \quad \delta' = \frac{\delta}{(d + 3HL)(1 + H\log_2(1/\varepsilon))},\quad \eta = \frac{\epsilon}{8Hd},\\
		& M_2 = 2\log(1/\delta')\cdot \frac{16(2 + 4H + 2Hd)^2}{\epsilon^2},\quad M_1 = 2\log(1/\delta')\cdot \frac{256H^2d^2}{\epsilon^2}, \quad L = \frac{1}{\eta}\end{align*}
	Algorithm \ref{alg-diff} will output an $\epsilon$-optimal policy with probability at least $1 - \delta$. This algorithm will use at most
	$$\tilde{\mathcal{O}}\left(\frac{H^5d^3}{\epsilon^3}\right)$$
	number of trajectories.
\end{theorem}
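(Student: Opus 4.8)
The plan is to combine the closed-form telescoping of the Bellman equation with a performance-difference decomposition, exactly as in the Convex-BallRL analysis, but tracking the two extra devices that Algorithm~\ref{alg-diff} needs in the state-dependent setting: the grid search over the unknown norm $\Theta$, and the hierarchical re-exploration loop. The first thing I would do is record the identity \eqref{eq-bell}: expanding $Q_h^*(s_h,a_h)=\langle\varphi(s_h)+a_h,\theta_h^*\rangle$ through \eqref{eq-bellman} and using the ball structure (Assumption~\ref{ass-ball}) together with $\|\theta_h^*\|_2=\Theta$ (Assumption~\ref{ass-bound}) to evaluate $\max_{a_{h+1}}Q_{h+1}^*(s_{h+1},a_{h+1})=\langle\varphi(s_{h+1}),\theta_{h+1}^*\rangle+\rho(s_{h+1})\Theta$, then taking $\EE^\pi$ and summing over $h$ so the $\langle\varphi(s_h),\theta_h^*\rangle$ terms telescope and only the policy-independent $h=1$ term survives. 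Two corollaries of \eqref{eq-bell} do all the work. Applying it to the pair $\pi_{h,i}$ and $\pi_{h,0}$ from \eqref{eq-pi} — which copy the current policy up to step $h$ and play $\mathbf 0$ afterwards — all terms cancel except the step-$h$ action term on the left and the reward/radius differences on the right, yielding the exact identity $\langle\mathbf e_i,\theta_h^*\rangle\,\EE^{\pi_{h,i}}[\rho(s_h)]=(\bar R_{h,i}-\bar R_{h,0})+\Theta(\bar\sigma_{h,i}-\bar\sigma_{h,0})$, where $\bar R_{h,i}$ and $\bar\sigma_{h,i}$ are the true means of the reward-sum and radius-sum quantities the algorithm estimates by $R_{h,i}$ and $s_{h,i}$. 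Applying \eqref{eq-bell} through the standard performance-difference telescoping gives, for any policy $\pi$, $V^*-V^\pi=\sum_{h=1}^H\EE^\pi\big[\rho(s_h)\Theta-\langle\pi(s_h),\theta_h^*\rangle\big]$, which reduces bounding suboptimality to measuring how far the greedy action $\pi(s_h)=\rho(s_h)\hat\theta_h/\|\hat\theta_h\|_2$ is from the true maximizer at each step.

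Next I would fix the high-probability ``good event.'' All quantities averaged in Algorithm~\ref{alg-diff} are bounded by Assumption~\ref{ass-bound}, so Hoeffding's inequality gives each empirical average to within $\mathcal{O}(\sqrt{\log(1/\delta')/M})$ of its mean; a union bound over the $d+3HL$ estimators produced per iteration and the at most $1+H\log_2(1/\varepsilon)$ iterations — precisely the product in the definition of $\delta'$ — makes all of these simultaneously accurate with probability at least $1-\delta$. On this event I would establish termination of the while loop: each pass sets $\rho_h\leftarrow\rho_h^l$ for some step $h$ with $\rho_h^l\ge 2\rho_h$ and $\rho_h^l\ge\varepsilon$, so the scalar $\rho_h$ at least doubles whenever that step is touched, and it stays in $[\varepsilon,1]$ because $\rho(s_h)\le\sum_{h'}\rho(s_{h'})\le 1$; hence each step is touched at most $1+\log_2(1/\varepsilon)$ times and the loop runs at most $H(1+\log_2(1/\varepsilon))$ times. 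At termination I would record the dichotomy that the loop condition forces: for the grid index $l^\star$ with $|l^\star\eta-\Theta|\le\eta$ and for every step $h$, either $\EE^{\pi_{l^\star}'}[\rho(s_h)]\lesssim\varepsilon$ (``ignorable''), or $\EE^{\pi_{l^\star}'}[\rho(s_h)]\lesssim 2\rho_h$ with $\rho_h$ the value used when step $h$ was last explored.

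Finally I would chain the estimates. From the exact identity above, the formula $\hat\theta_{h,i}^{l}=\frac{(s_{h,i}-s_{h,0})\xi+(R_{h,i}-R_{h,0})}{\rho_h}$ with $\xi=l\eta$, the concentration bounds, $|\bar\sigma_{h,i}-\bar\sigma_{h,0}|\le1$, and the fact that $\rho_h\approx\EE^{\pi_{h,i}}[\rho(s_h)]$ (the denominator is matched because $\pi_{h,i}$ reproduces $\pi_{l^\star}'$ before step $h$), I get, for the correct grid index, $\|\hat\theta_h^{l^\star}-\theta_h^*\|_2\le\mathcal{O}\!\big(\sqrt d\,(\sqrt{\log(1/\delta')/M_1}+\eta)/\rho_h\big)$. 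Feeding this into the performance-difference sum, the normalization bound $\big\|\tfrac{u}{\|u\|_2}-\tfrac{v}{\|v\|_2}\big\|_2\le\tfrac{2\|u-v\|_2}{\|v\|_2}$ makes the step-$h$ term at most $2\,\EE^{\pi_{l^\star}'}[\rho(s_h)]\,\|\hat\theta_h^{l^\star}-\theta_h^*\|_2$; summing, the ignorable steps contribute at most $2H\varepsilon=\epsilon/4$ (since $\varepsilon=\epsilon/(8H)$), while the controlled steps contribute at most $\sum_h 4\rho_h\cdot\mathcal{O}(\sqrt d(\sqrt{\log(1/\delta')/M_1}+\eta)/\rho_h)=\mathcal{O}\big(H\sqrt d(\sqrt{\log(1/\delta')/M_1}+\eta)\big)$, which the choices $M_1=\tilde{\mathcal{O}}(H^2d^2/\epsilon^2)$ and $\eta=\epsilon/(8Hd)$ drive below $\epsilon/4$. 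Thus $V^*-V^{\pi_{l^\star}'}\le\epsilon/2$; since the output is the empirically best of $\pi_1',\dots,\pi_L'$ and $M_2$ is large enough that each $R_l$ is within $\epsilon/4$ of $V^{\pi_l'}$, the output policy is $\epsilon$-optimal. For the trajectory count, each of the $H(1+\log_2(1/\varepsilon))=\tilde{\mathcal{O}}(H)$ iterations spends $(d+1)M_1$ trajectories on exploration and $LM_2$ trajectories on evaluation, and with $L=1/\eta=8Hd/\epsilon$ and $M_2=\tilde{\mathcal{O}}(H^2d^2/\epsilon^2)$ the dominant contribution is $\tilde{\mathcal{O}}(H)\cdot LM_2=\tilde{\mathcal{O}}(H^5d^3/\epsilon^3)$.

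The step I expect to be the main obstacle is controlling the $1/\rho_h$ amplification in the estimation error while the exploration policies keep changing. The bound $\|\hat\theta_h^{l^\star}-\theta_h^*\|_2\lesssim\sqrt d(\cdots)/\rho_h$ is only useful when it multiplies an $\EE^\pi[\rho(s_h)]=\mathcal{O}(\rho_h)$ factor in the performance-difference sum, yet updating the estimates $\hat\theta_{h'}^{l^\star}$ for earlier steps $h'<h$ changes the state distribution at step $h$ and hence $\EE^{\pi_{l^\star}'}[\rho(s_h)]$ after the fact; this is exactly why the loop re-estimates every $\rho_h^l$ after each update and re-explores step $h$ whenever $\rho_h^l$ has grown past $2\rho_h$, and the delicate part is to show this mechanism simultaneously halts in $\tilde{\mathcal{O}}(H)$ rounds and leaves every step of the final policy in one of the two good regimes. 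The remaining ingredients — the telescoping identity, the Hoeffding/union bound, the grid-search correctness, and the degenerate case $\Theta=0$ where all rewards vanish — are routine.
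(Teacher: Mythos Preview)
Your proposal is correct and follows essentially the same route as the paper: the telescoping identity \eqref{eq-bell} (the paper's Lemma~\ref{lem-eq}), the performance-difference bound $V^*-V^\pi\le 2\sum_h\EE^\pi[\rho(s_h)]\|\hat\theta_h-\theta_h^*\|_2$ (the paper's Lemma~\ref{lem2}, which you derive directly via the normalization inequality rather than the add/subtract trick), Hoeffding plus a union bound sized by the $\delta'$ denominator, the grid-search argument for $\Theta$, the termination dichotomy, and the doubling argument for the loop count. One harmless slip: the exploration policies $\pi_{h,i}$ in \eqref{eq-pi} copy the \emph{triggering} $\pi_l'$, not necessarily $\pi_{l^\star}'$, but since the same data $(R_{h,i},s_{h,i},\rho_h)$ is reused to form $\hat\theta_h^{l'}$ for every $l'$ and your coordinate identity holds for whatever policy induced the exploration, the bound on $\hat\theta_h^{l^\star}$ goes through unchanged (this is also why the paper's error bound picks up an $M_2$ term from $|\rho_h-\EE^{\pi_l'}[\rho(s_h)]|$, which you acknowledge but do not write out).
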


\paragraph{Proof Sketch of Theorem \ref{thm1}.} Our first step of the proof is to use Bellman Equation to prove \eqref{eq-bell}:
$$\langle\varphi(s_1), \theta_1^*\rangle + \mathbb{E}^\pi\left[\sum_{h=1}^H\langle a_h, \theta_h^*\rangle\right] = \mathbb{E}^\pi\left[\sum_{h=1}^HR(s_h, a_h)\right] + \Theta\cdot \mathbb{E}^\pi\left[\sum_{h=1}^H \rho(s_{h+1})\right],$$
which can be obtained through telescoping the following closed form of Bellman Equation at step $h$:
$$\mathbb{E}^\pi\left[\langle \varphi(s_h), \theta_h^*\rangle + \langle a_h, \theta_h^*\rangle\right] = \mathbb{E}^\pi\left[R(s_h, a_h) + \rho(s_{h+1})\cdot\|\theta_{h+1}\|_2 + \langle \varphi(s_{h+1}), \theta_{h+1}^*\rangle\right].$$
\par Our second step is a result bounding the value function error of the greedy policies:
$$\mathbb{E}[V_1^*(s_1)] - \mathbb{E}[V_1^\pi(s_1)]\le 2\sum_{h=1}^H \mathbb{E}^\pi[\rho(s_h)]\cdot \left\|\hat{\theta}_h - \theta_h^*\right\|_2.$$
Therefore, if $\mathbb{E}^\pi[\rho(s_h)]$ is small for some $h$ (say less than $\epsilon$), then we can ignore this term, since it will never make big difference on the error. In the following, we assume that $\mathbb{E}^\pi[\rho(s_h)]\ge \epsilon$.
\par Next, we observe that there exists some $\Theta' = l_0\eta$ such that $|\Theta' - \Theta|\le\eta$. And for the iteration $l_0$, according to Hoeffding inequality we can get
$$\rho_h\left\|\theta_h - \hat{\theta}_{h}^{l_0}\right\|_2\le \eta d + 2d\sqrt{\frac{2\log (1/\delta')}{M_1}} + d\sqrt{\frac{2\log (1/\delta')}{M_2}}$$
with high probability, where $\rho_h$ is the expectation of $\rho(s_h)$ according to the exploration policy.
\par Finally, if $\rho_h^{l_0}$ of the greedy policy satisfies that $\rho_h^{l_0}\le \rho_h$, then the above inequality can guarantee that this greedy policy is near optimal. Otherwise, the value of $\rho_h$ will become twice as before according to our algorithm, and this process will terminate in $\log(1/\epsilon)$ number of iterations, since according to our assumption the initial value of $\rho_h$ is at least $\epsilon$, and $\rho_h$ cannot be large than 1.
\par Combining these steps together, we can show that the algorithm will end in certain number of iterations, and when the algorithm ends, it will output a near optimal policy with high probability.


\section{Conclusion}\label{sec-summary}
\par We presented the BallRL reinforcement learning algorithm that provides sample-efficient learning guarantees when the optimal action-value function is linear and actions exhibit a ball structure.  We further generalized the ball structure to both convex actions sets and changing ball radius between states. Our techniques demonstrate that there is hope for efficient learning in linear RL when actions can sufficiently explore the feature space. The ball structure assumption itself is a sufficient, but not fully necessary condition to ensure full exploration of the feature space. We believe that the idea of the action set allowing for sufficient exploration can be achieved (perhaps approximately) in many practical settings.

An interesting research direction is to dive deeper into assuming the actions lie in convex sets instead of a pure ball structure. While the problem can be solved when the action set is consistent between all actions, it remains to be shown if convex sets can vary between states. Additionally, with different radii, our algorithm is polynomially efficient when unknown parameters across the horizon share the same norm. It would be valuable to see whether this assumption can be removed and parameters allowed to have different norms.

\acks{We thank Philip Amortila for helpful discussions.}

\bibliography{alt2023-sample}

\newpage
\appendix

\section{Proof of Theorem \ref{thm-convex}}
\label{app-convex}
\begin{proof}[Proof of Theorem \ref{thm-convex}]
\par First of all, according to Bellman Equation \eqref{eq-bellman}, we have
\begin{align*}
	&\quad \langle \varphi(s_h), \theta_h^*\rangle + \langle a_h, \theta_h^*\rangle = Q_h^*(s_h, a_h)\\
	& = r(s_h, a_h) + \sum_{s_{h+1}} P(s_{h+1}|s_h, a_h) V_{h+1}^*(s_{h+1})\\
	& = r(s_h, a_h) + \sum_{s_{h+1}} P(s_{h+1}|s_h, a_h) \max_{a_{h+1}\in \mathcal{A}_{h+1}}\langle \varphi(s_{h+1}) + a_{h+1}, \theta_{h+1}^*\rangle\\
	& = r(s_h, a_h) + \sum_{s_{h+1}} P(s_{h+1}|s_h, a_h)\langle \varphi(s_{h+1}), \theta_{h+1}^*\rangle + \sum_{s_{h+1}} P(s_{h+1}|s_h, a_h)\rho_{h+1}\max_{a_{h+1}\in\mathcal{A}_{h+1}}\langle a_{h+1}, \theta_{h+1}^*\rangle\\
	& = \mathbb{E}\left[R(s_h, a_h) + \langle \varphi(s_{h+1}), \theta_{h+1}^*\rangle + \rho_{h+1}\max_{a_{h+1}\in\mathcal{A}_{h+1}}\langle a_{h+1}, \theta_{h+1}^*\rangle\Big|s_h, a_h\right],
\end{align*}
where $R(s_h, a_h)$ is the instant reward we obtain after choosing action $a_h$ from state $s_h$ ($R(s_h, a_h)$ has mean $r(s_h, a_h)$). Hence for a given fixed policy $\pi$, suppose a trajectory following this policy is $(s_1, a_1, \cdots , s_H, a_H)$, then we have
$$\mathbb{E}^\pi\left[\langle\varphi(s_h), \theta_h^*\rangle\right] + \mathbb{E}^\pi[\langle a_h, \theta_h^*\rangle] = \mathbb{E}^\pi[\langle \varphi(s_{h+1}), \theta_{h+1}^*\rangle] + \mathbb{E}^\pi\left[R(s_h, a_h)\right] + \rho_{h+1}\max_{a_{h+1}\in\mathcal{A}_{h+1}}\langle a_{h+1}, \theta_{h+1}^*\rangle.$$
Summing this up from $h = 1$ to $h = H$ and noticing that $\theta_{h+1}^* = 0$, we obtain that
$$\mathbb{E}^\pi\left[\langle\varphi(s_1), \theta_1^*\rangle\right] + \mathbb{E}^\pi\left[\sum_{h=1}^H\langle a_h, \theta_h^*\rangle\right] = \sum_{h=1}^H\mathbb{E}^\pi[R(s_h, a_h)] + \sum_{h=1}^H \rho_{h+1}\max_{a_{h+1}\in\mathcal{A}_{h+1}}\langle a_{h+1}, \theta_{h+1}^*\rangle.$$
With our choice of $\pi_0$ (the policy which choose action $0$ at any state and step), we obtain
$$\mathbb{E}^{\pi_0}\left[\langle\varphi(s_1), \theta_1^*\rangle\right] = \sum_{h=1}^H\mathbb{E}^{\pi_0}[R(s_h, a_h)] + \sum_{h=1}^H \rho_{h+1}\max_{a_{h+1}\in\mathcal{A}_{h+1}}\langle a_{h+1}, \theta_{h+1}^*\rangle.$$
We notice that for every policy $\pi$, $\mathbb{E}^\pi\left[\langle\varphi(s_1), \theta_1^*\rangle\right]$ are identical. Hence after subtracting the above two equations, we get
\begin{equation}\label{eq-difference}
	\mathbb{E}^\pi\left[\sum_{h=1}^H\langle a_h, \theta_h^*\rangle\right] = \mathbb{E}^{\pi}\left[\sum_{h=1}^H R(s_h, a_h)\right] - \mathbb{E}^{\pi_0}\left[\sum_{h=1}^H R(s_h, a_h)\right].
\end{equation}
With our choice of policy $\pi_{h, i}$, the above equation indicates that
$$\rho_h\theta_{h, i}^* = \rho_h\langle\mathbf{e}_i, \theta_{h}^*\rangle = \mathbb{E}^{\pi_{h, i}}\left[\sum_{h=1}^H R(s_h, a_h)\right] - \mathbb{E}^{\pi_0}\left[\sum_{h=1}^H R(s_h, a_h)\right],$$
where $\theta_{h, i}^*$ is the $i$-th component of $\theta_h^*$. 
According to our algorithm, we have $\rho_h\hat{\theta}_{h, i} = R_{h, i} - R_{h, 0}$, and we also have 
$$\mathbf{E}\left[R_{h, i} - R_{h, i}\right] = \mathbb{E}^{\pi_{h, i}}\left[\sum_{h=1}^H R(s_h, a_h)\right] - \mathbb{E}^{\pi_0}\left[\sum_{h=1}^H R(s_h, a_h)\right] = \rho_h\theta_{h, i}^*.$$
Therefore, according to Hoeffding inequality, with probability at least $1 - 2\delta'$ we have,
$$\rho_h\left|\hat{\theta}_{h, i} - \theta_{h, i}^*\right|\le 2\sqrt{\frac{\log(1/\delta')}{2M}} = \sqrt{\frac{2\log(1/\delta')}{M}},$$
if we assume that $\sum_{h=1}^H R(s_h, a_h)\in [0, 1]$ always holds. This indicates that with probability at least $1 - 2dH\delta'$, for every $1\le h\le H$, 
$$\rho_h\left\|\hat{\theta}_h - \theta_h^*\right\|_2\le \sqrt{\frac{2d\log(1/\delta')}{M}}.$$
\par Moreover, since $a_h = \arg\max_{a\in\mathcal{A}_h} \langle a_h, \hat{\theta}_h\rangle,$ we have
\begin{align*}
	\langle a_h, \theta_h^*\rangle - \langle a_h^*, \theta_h^*\rangle & = \langle a_h, \theta_h^* - \hat{\theta}_h\rangle + \langle a_h, \hat{\theta}_h\rangle - \langle a_h^*, \hat{\theta}_h\rangle + \langle a_h^*, \hat{\theta_h} - \theta_h^*\rangle\\
	& \le \langle a_h, \theta_h^* - \hat{\theta}_h\rangle + \langle a_h^*, \hat{\theta_h} - \theta_h^*\rangle\\
	& \le 2\eta_h\cdot \|\theta_h - \hat{\theta}_h\|_2\\
	& \le 2B\cdot \sqrt{\frac{2d\log(1/\delta')}{M}},
\end{align*}
where the first inequality is due to $a_h = \arg\max_{a\in\mathcal{A}_h} \langle a_h, \hat{\theta}_h\rangle$, and the second inequality is due to $a_h, a_h^*\in \mathcal{A}_h\subset B_2(\eta_h)$.
Therefore, according to \eqref{eq-difference} we have
$$V^*(s_1) - V^\pi(s_1)\le 2BH\sqrt{\frac{2d\log(1/\delta')}{M}}.$$
With our choice of $\delta' = \frac{\delta}{2dH}$ and $M = \frac{8H^2B^2d\log(2dH/\delta)}{\epsilon^2}$, we have with probability at least $1 - \delta$, the output policy $\pi$ satisfies that
$$V^*(s_1) - V^\pi(s_1)\le\epsilon.$$

\end{proof}

\section{Proof of Theorem \ref{thm1}}
\label{app-diff}
\par To prove the complexity of our algorithm, we first show the following several lemmas:
\begin{lemma}\label{lem-eq}
	For each policy $\pi$, we have the following equation
	\begin{equation}\label{eq-eq}\langle\varphi(s_1), \theta_1^*\rangle + \mathbb{E}^\pi\left[\sum_{h=1}^H\langle a_h, \theta_h^*\rangle\right] = \mathbb{E}^\pi\left[\sum_{h=1}^HR(s_h, a_h)\right] + \Theta\cdot \mathbb{E}^\pi\left[\sum_{h=1}^H \rho(s_{h+1})\right],\end{equation}
	where we use the notation that $\rho(s_{H+1}) = 0$. 
\end{lemma}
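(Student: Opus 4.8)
The plan is to reproduce the telescoping argument from the proof of Theorem~\ref{thm-convex}, but with the generic convex maximization replaced by the exact closed form that is available for an $L_2$ ball. The only genuinely new ingredient is the elementary identity $\max_{\|a\|_2\le\rho}\langle a,\theta\rangle=\rho\|\theta\|_2$, attained at $a=\rho\,\theta/\|\theta\|_2$. Using Assumption~\ref{ass-ball} (so $\mA(s)=B_2(\rho(s))$ after the normalization $\varphi(s,a)=\varphi(s)+a$) together with Assumption~\ref{ass-bound} (so $\|\theta_{h+1}^*\|_2=\Theta$), this gives
$$V_{h+1}^*(s_{h+1}) = \max_{a_{h+1}\in B_2(\rho(s_{h+1}))}\langle \varphi(s_{h+1})+a_{h+1},\theta_{h+1}^*\rangle = \langle\varphi(s_{h+1}),\theta_{h+1}^*\rangle + \Theta\,\rho(s_{h+1}).$$

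Substituting this into the optimal Bellman equation~\eqref{eq-bellman} and using Assumption~\ref{ass-transition} for the left-hand side yields the closed-form one-step relation stated in the proof sketch,
$$\langle\varphi(s_h),\theta_h^*\rangle + \langle a_h,\theta_h^*\rangle = \EE\left[R(s_h,a_h) + \langle\varphi(s_{h+1}),\theta_{h+1}^*\rangle + \Theta\,\rho(s_{h+1}) \mid s_h,a_h\right],$$
where I use $\EE[R(s_h,a_h)\mid s_h,a_h]=r(s_h,a_h)$ and $s_{h+1}\sim P(\cdot\mid s_h,a_h)$. Taking expectations over a trajectory generated by a fixed policy $\pi$ and applying the tower property to remove the conditioning gives, for every $1\le h\le H$,
$$\EE^\pi\left[\langle\varphi(s_h),\theta_h^*\rangle\right] + \EE^\pi\left[\langle a_h,\theta_h^*\rangle\right] = \EE^\pi\left[R(s_h,a_h)\right] + \EE^\pi\left[\langle\varphi(s_{h+1}),\theta_{h+1}^*\rangle\right] + \Theta\,\EE^\pi\left[\rho(s_{h+1})\right].$$

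Finally I would sum this identity over $h=1,\dots,H$. The feature terms $\EE^\pi[\langle\varphi(s_h),\theta_h^*\rangle]$ telescope: the $h=1$ term $\langle\varphi(s_1),\theta_1^*\rangle$ survives on the left, while the $h=H+1$ term vanishes since $\theta_{H+1}^*=0$ (equivalently $V_{H+1}^*\equiv 0$), which is consistent with the convention $\rho(s_{H+1})=0$ adopted in the statement. Rearranging gives exactly~\eqref{eq-eq}. I do not expect a real obstacle here — the lemma is essentially bookkeeping once the ball-maximization identity is in hand. The only points requiring care are the boundary/telescoping step (correctly killing the $h=H+1$ feature term via $\theta_{H+1}^*=0$ and using the conventions $V_{H+1}^*=0$, $\rho(s_{H+1})=0$ consistently) and the measurability justification for the $\EE^\pi$ manipulations, which is immediate because at each step $(s_h,a_h)$ determines the conditional law of $s_{h+1}$ and $\pi$ is held fixed. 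It is also worth noting, for later use, that $\langle\varphi(s_1),\theta_1^*\rangle$ is the same across all policies (the law of $s_1$ does not depend on $\pi$), although this observation is not needed for the lemma itself.
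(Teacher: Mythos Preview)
Your proposal is correct and follows essentially the same route as the paper's proof: expand the optimal Bellman equation using the linear $Q^*$ parametrization and the ball-maximization identity $\max_{a\in B_2(\rho)}\langle a,\theta\rangle=\rho\|\theta\|_2$ (together with $\|\theta_{h+1}^*\|_2=\Theta$), take $\EE^\pi$, and telescope over $h=1,\dots,H$ using $\theta_{H+1}^*=0$. The paper's version is slightly terser about the tower-property and boundary steps, but the logic is identical.
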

\begin{proof}
	According to the Bellman Equation \eqref{eq-bellman} at step $h$, we have
	\begin{align*}
	Q_h^*(s_h, a_h) & = r(s_h, a_h) + \sum_{s_{h+1}} P(s_{h+1}|s_h, a_h) V_{h+1}^*(s_{h+1})\\
	& = r(s_h, a_h) + \sum_{s_{h+1}} P(s_{h+1}|s_h, a_h) \max_{a_{h+1}\in \mA(s_{h+1})}\langle \varphi(s_{h+1}) + a_{h+1}, \theta_{h+1}^*\rangle\\
	& = r(s_h, a_h) + \sum_{s_{h+1}} P(s_{h+1}|s_h, a_h) \max_{a_{h+1}\in B_2(\rho(s_{h+1}))}\langle \varphi(s_{h+1}) + a_{h+1}, \theta_{h+1}^*\rangle\\
	& = r(s_h, a_h) + \sum_{s_{h+1}} P(s_{h+1}|s_h, a_h)\rho(s_{h+1})\|\theta_{h+1}^*\|_2 + \sum_{s_{h+1}} P(s_{h+1}|s_h, a_h)\langle \varphi(s_{h+1}), \theta_{h+1}^*\rangle\\
	& = \mathbb{E}^\pi\left[R(s_h, a_h) + \rho_{h+1}\|\theta_{h+1}^*\|_2 + \langle \varphi(s_{h+1}), \theta_{h+1}^*\rangle\big|s_h, a_h\right]\\
	& = \mathbb{E}^\pi\left[R(s_h, a_h) + \rho_{h+1}\cdot \Theta + \langle \varphi(s_{h+1}), \theta_{h+1}^*\rangle\big|s_h, a_h\right].
	\end{align*}
	Since 
	$$Q_h^*(s_h, a_h) = \langle \varphi(s_h, a_h), \theta_h^*\rangle = \langle \varphi(s_h), \theta_h^*\rangle + \langle a_h, \theta_h^*\rangle,$$
	if we take the expectation over $s_h$ and $a_h$, we will have
	$$\mathbb{E}^\pi\left[\langle \varphi(s_h), \theta_h^*\rangle + \langle a_h, \theta_h^*\rangle\right] = \mathbb{E}^\pi\left[R(s_h, a_h) + \rho(s_{h+1})\cdot\Theta + \langle \varphi(s_{h+1}), \theta_{h+1}^*\rangle\right].$$
	Telescoping this equation from $h = 1$ to $h = H$, we will get \eqref{eq-eq}.
\end{proof}

\par The following lemma indicates that if we have accurate enough estimation on the $\theta_1, \cdots, \theta_H$, then the greedy policy is a nearly optimal policy.
\begin{lemma}\label{lem2}
	Suppose we have estimation $\hat{\theta}_1, \cdots, \hat{\theta}_H$, which satisfy
	$$\left\|\hat{\theta}_h - \theta_h^*\right\|_2\le \varepsilon_h, \quad \forall 1\le h\le H.$$
	If consider policy $\pi$ to be the greedy policy with respect to $\hat{\theta}_1, \cdots, \hat{\theta}_H$, then we have, e.g.
	$$\pi(s_h) = \argmax_{a_h\in\mA(s_h)}\left\langle \varphi(s_h, a_h), \hat{\theta}_h\right\rangle,$$
	then we have
	$$V_1^*(s_1) - V_1^\pi(s_1)\le 2\sum_{h=1}^H \EE[\rho(s_h)]\varepsilon_h.$$
\end{lemma}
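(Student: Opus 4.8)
\textbf{Proof proposal for Lemma \ref{lem2}.} The plan is to combine the standard value-decomposition identity with an elementary ``greedy-on-a-perturbed-objective'' bound, the point being that the ball-structure assumption forces the two compared actions at each state to lie in the \emph{same} set, so no term measuring the mismatch between state distributions ever appears.

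\emph{Step 1: value decomposition along $\pi$.} Let $\pi^*$ be the optimal policy, and for a state $s_h$ write $a_h^* = \pi^*(s_h)$ and $a_h = \pi(s_h)$. Peeling one layer of the Bellman recursion,
$$V_h^*(s_h) - V_h^\pi(s_h) = \big(Q_h^*(s_h,a_h^*) - Q_h^*(s_h,a_h)\big) + \big(Q_h^*(s_h,a_h) - Q_h^\pi(s_h,a_h)\big),$$
and by \eqref{eq-bellman} the last bracket equals $\EE_{s_{h+1}\sim P(\cdot\mid s_h,a_h)}\big[V_{h+1}^*(s_{h+1}) - V_{h+1}^\pi(s_{h+1})\big]$. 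Taking expectations under the trajectory law of $\pi$ and unrolling from $h=1$ to $H$ (with $V_{H+1}\equiv 0$) yields
$$V_1^*(s_1) - V_1^\pi(s_1) = \sum_{h=1}^H \EE^\pi\big[Q_h^*(s_h,a_h^*) - Q_h^*(s_h,a_h)\big].$$

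\emph{Step 2: bounding each per-step gap.} By Assumption \ref{ass-transition} with the normalization $\varphi(s,a)=\varphi(s)+a$, we have $Q_h^*(s_h,a) = \langle\varphi(s_h),\theta_h^*\rangle + \langle a,\theta_h^*\rangle$; since $\langle\varphi(s_h),\theta_h^*\rangle$ is constant in $a$, it cancels both in the gap and in the maximizers, so $a_h^* = \argmax_{a\in B_2(\rho(s_h))}\langle a,\theta_h^*\rangle$, $a_h = \argmax_{a\in B_2(\rho(s_h))}\langle a,\hat\theta_h\rangle$, and $Q_h^*(s_h,a_h^*) - Q_h^*(s_h,a_h) = \langle a_h^*,\theta_h^*\rangle - \langle a_h,\theta_h^*\rangle$. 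The crucial observation is that by Assumption \ref{ass-ball} both $a_h^*$ and $a_h$ lie in the \emph{same} ball $B_2(\rho(s_h))$ --- the action set depends on the state, not on which policy generated the trajectory. Splitting
$$\langle a_h^*,\theta_h^*\rangle - \langle a_h,\theta_h^*\rangle = \langle a_h^*,\theta_h^*-\hat\theta_h\rangle + \big(\langle a_h^*,\hat\theta_h\rangle - \langle a_h,\hat\theta_h\rangle\big) + \langle a_h,\hat\theta_h-\theta_h^*\rangle,$$
the middle bracket is $\le 0$ by optimality of $a_h$ for $\hat\theta_h$ over $B_2(\rho(s_h))$, while Cauchy--Schwarz together with $\|a_h^*\|_2,\|a_h\|_2\le\rho(s_h)$ and $\|\hat\theta_h-\theta_h^*\|_2\le\varepsilon_h$ bounds each remaining inner product by $\rho(s_h)\varepsilon_h$. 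Hence $Q_h^*(s_h,a_h^*) - Q_h^*(s_h,a_h)\le 2\rho(s_h)\varepsilon_h$.

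\emph{Step 3: assemble.} Plugging this into Step 1 and using linearity of expectation,
$$V_1^*(s_1) - V_1^\pi(s_1) \le \sum_{h=1}^H 2\varepsilon_h\,\EE^\pi[\rho(s_h)],$$
which is the claim. I do not expect a genuine obstacle; the only point needing care is to make explicit that the step-$h$ comparison is between two maximizers over the \emph{identical} ball $B_2(\rho(s_h))$, which is exactly what eliminates any dependence on the discrepancy between the state distributions of $\pi$ and $\pi^*$, and to keep track that every expectation is taken under the trajectory law of the learned policy $\pi$ (with one further expectation over $s_1\sim\mu$ at the end if the initial state is random).
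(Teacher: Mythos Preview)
Your proposal is correct and follows essentially the same argument as the paper's proof: both use the add--subtract trick with $\hat\theta_h$ together with Cauchy--Schwarz and $\|a\|_2\le\rho(s_h)$ to bound the per-step gap by $2\rho(s_h)\varepsilon_h$, and both propagate this along the trajectory of $\pi$ via the Bellman recursion. The only cosmetic difference is that you first write out the full telescoped identity $V_1^*-V_1^\pi=\sum_h\EE^\pi[Q_h^*(s_h,a_h^*)-Q_h^*(s_h,a_h)]$ and then bound each summand, whereas the paper interleaves the two, bounding one step and recursing.
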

\begin{proof}
	We use $\pi^*(s)$ to denote the action of $s$ in the best policy (which is deterministic). Then we have
	$$V_1^*(s_1) = Q_1^*(s_1, \pi^*(s_1)) = \langle \varphi(s_1), \theta_1^*\rangle + \langle \pi^*(s_1), \theta_1^*\rangle.$$
	We further have
	\begin{align*}
		\langle \pi^*(s_1), \theta_1^*\rangle & = \left\langle \pi^*(s_1), \theta_1^* - \hat{\theta}_1\right\rangle + \left\langle \pi^*(s_1), \hat{\theta}_1\right\rangle\le \rho(s_1)\cdot \left\|\theta_1^* - \hat{\theta}_1\right\|_2 + \left\langle \pi^*(s_1), \hat{\theta}_1\right\rangle\\
		& \le \rho(s_1)\cdot\left\|\theta_1^* - \hat{\theta}_1\right\|_2 + \left\langle \pi(s_1), \hat{\theta}_1\right\rangle = 
\rho(s_1)\cdot\left\|\theta_1^* - \hat{\theta}_1\right\|_2 + \left\langle \pi(s_1), \theta_1^*\right\rangle + \left\langle \pi(s_1), \hat{\theta}_1 - \theta_1^*\right\rangle\\
		& \le 2\rho(s_1)\cdot\left\|\theta_1^* - \hat{\theta}_1\right\|_2 + \left\langle \pi(s_1), \theta_1^*\right\rangle,
	\end{align*}
	where the first and last inequality are due to $\pi(s_1), \pi^*(s_1)\in \mA(s_1)$, and the second inequality is due to the definition of $\pi(s_1)$ (greedy policy $w.r.t$ $\hat{\theta}_1$). Therefore, we obtain that
	\begin{align*}
		V_1^*(s_1) - V^\pi(s_1) & \le 2\rho(s_1)\cdot\left\|\theta_1^* - \hat{\theta}_1\right\|_2 + \left\langle \pi(s_1), \theta_1^*\right\rangle + \langle \varphi(s_1), \theta_1^*\rangle - V^\pi(s_1)\\
		& = 2\rho(s_1)\cdot\left\|\theta_1^* - \hat{\theta}_1\right\|_2 + Q^*_1(s_1, \pi(s_1)) - V^\pi(s_1)\\
		& = 2\rho(s_1)\cdot\left\|\theta_1^* - \hat{\theta}_1\right\|_2 + r(s_1, \pi(s_1)) + \mathbb{E}^\pi[V_2^*(s_2)] - \left(r(s_1, \pi(s_1)) + \mathbb{E}^\pi[V_2^\pi(s_2)]\right)\\
		& = 2\rho(s_1)\cdot\left\|\theta_1^* - \hat{\theta}_1\right\|_2 + \mathbb{E}^\pi\left[V_2^*(s_2) - V_2^\pi(s_2)\right].
	\end{align*}
	We can continue the same process for $V_2$, and obtain that
	$$\mathbb{E}^\pi\left[V_2^*(s_2) - V_2^\pi(s_2)\right]\le 2\mathbb{E}^\pi[\rho(s_2)]\cdot\left\|\theta_2^* - \hat{\theta}_2\right\|_2 + \mathbb{E}^\pi\left[V_3^*(s_3) - V_3^\pi(s_3)\right].$$
	Keeping doing this until we get $V_{H+1}^*(s_{H+1}) - V_{H+1}^\pi(s_{H+1})$, which is zero, we obtain that
	$$V_1^*(s_1) - V_1^\pi(s_1)\le 2\sum_{h=1}^H \EE[\rho(s_h)]\cdot\left\|\theta_h^* - \hat{\theta}_h\right\|_2.$$
	Therefore, when $\left\|\theta_h^* - \hat{\theta}_h\right\|_2\le \varepsilon_h$, we will have
	$$V_1^*(s_1) - V_1^\pi(s_1)\le 2\sum_{h=1}^H \EE[\rho(s_h)]\varepsilon_h$$
\end{proof}

\begin{proof}[Proof of Theorem \ref{thm1}]
	We will divide our proof in the following parts:
	\paragraph{Concentration Inequalities} In the following, we fix all parameters within one `while' loop in the algorithm.
	\par According to the Boundedness Assumption (Assumption \ref{ass-bound}), and Hoeffding inequality, we have that each of the following inequalities holds with probability at least $1 - \delta'$ separately for $0\le i\le d$,
	\begin{equation}\label{eq-9}\begin{aligned}
		& \left|R_{h, i} - \mathbb{E}^{\pi_{h, i}}\left[\sum_{h'=1}^H R(s_{h'}, a_{h'})\right]\right|\le \sqrt{\frac{2\log (1/\delta')}{M_1}}\\
		& \left|s_{h, i} - \mathbb{E}^{\pi_{h, i}}\left[\sum_{h'=1}^H \rho(s_{h'+1})\right]\right|\le \sqrt{\frac{2\log (1/\delta')}{M_1}}.\\
	\end{aligned}\end{equation}
	We further notice that the before step $h$, policy $\pi_{h, i}$ and policy $\pi_{h, 0}$ are both same as policy $\pi$, and for every $h' > h$ and $s_{h'}\in \mS_{h'}$, we both have $\pi_{h, i}(s_{h'}) = \pi_{h, 0}(s_{h'}) = 0$. Therefore, according to \eqref{eq-eq}, we have
	\begin{equation}\label{eq-10}\begin{aligned}
		\mathbb{E}^\pi[\rho(s_h)]\cdot\langle \mathbf{e}_i, \theta_h^*\rangle & = \EE^{\pi_{h, i}}\left[\sum_{h'=1}^H\langle a_{h'}, \theta_{h'}^*\rangle\right] - \EE^{\pi_{h, 0}}\left[\sum_{h'=1}^H\langle a_{h'}, \theta_{h'}^*\rangle\right]\\
		& = \mathbb{E}^{\pi_{h, i}}\left[\sum_{h'=1}^HR(s_{h'}, a_{h'})\right] + \Theta\cdot \mathbb{E}^{\pi_{h, i}}\left[\sum_{h'=1}^H \rho(s_{h'+1})\right]\\
		&\quad  - \mathbb{E}^{\pi_{h, 0}}\left[\sum_{h'=1}^HR(s_{h'}, a_{h'})\right] - \Theta\cdot \mathbb{E}^{\pi_{h, 0}}\left[\sum_{h'=1}^H \rho(s_{h'+1})\right].
	\end{aligned}\end{equation}
	According to \eqref{eq-9}, if we further assume $|\Theta - \xi|\le \eta$, then with probability at least $1 - 2(d+1)\delta'$, we have for every $1\le i\le d$, 
	$$|\text{RHS of } \eqref{eq-10} - ((s_{h, i} - s_{h, 0})\xi + R_{h, i} - R_{h, 0})|\le \eta + \sqrt{\frac{2\log (1/\delta')}{M_1}} + \sqrt{\frac{2\log (1/\delta')}{M_1}} = \eta + 2\sqrt{\frac{2\log (1/\delta')}{M_1}},$$
	where we have used the fact that $|s_{h, i} - s_{h, 0}|\le 1$ and also $|\Theta|\le 1.$ Moreover, in the last `while' loop, with probability at least $1 - LH\delta$, according to Hoeffding inequality we have
	$$\left|\rho_{h}^l - \mathbb{E}^{\pi_l'}\left[\rho(s_h)\right]\right|\le \sqrt{\frac{2\log (1/\delta')}{M_2}}, \quad \forall 1\le l\le L, 1\le h\le H.$$
	Hence we have with probability at least $1 - (2d+3)\delta'$,
	$$\left|\rho_h\langle \mathbf{e}_i, \theta_h^*\rangle - ((s_{h, i} - s_{h, 0})\xi + R_{h, i} - R_{h, 0})\right|\le \eta + 2\sqrt{\frac{2\log (1/\delta')}{M_1}} + \sqrt{\frac{2\log (1/\delta')}{M_2}}.$$
	According to our algorithm, there exists one $l$ such that $|l\eta - \Theta|\le \eta$, we write this $l$ as $l_0$. Then with $\xi = l_0\eta$, we have
	$$\hat{\theta}_{h, i}^{l_0} = \frac{(s_{h, i} - s_{h, 0})\xi + R_{h, i} - R_{h, 0}}{\rho_h}.$$
	Therefore, we obtain that
	$$\left|\rho_h\left(\langle \mathbf{e}_i, \theta_h^*\rangle - \hat{\theta}_{h, i}^{l_0}\right)\right|\le \eta + 2\sqrt{\frac{2\log (1/\delta')}{M_1}} + \sqrt{\frac{2\log (1/\delta')}{M_2}}.$$
	According to our construction of $\hat{\theta}_h^{l_0}$, we have
	$$\rho_h\left\|\theta_h - \hat{\theta}_{h}^{l_0}\right\|_2\le \eta d + 2d\sqrt{\frac{2\log (1/\delta')}{M_1}} + d\sqrt{\frac{2\log (1/\delta')}{M_2}}.$$
	\par If in the next `while' loop, the condition does not satisfy, then before the next loop, for every $1\le h\le H$ and $1\le l\le L$, we will have $\rho_h^l\le \varepsilon$ or $\rho_h^l\le \rho_h$. Hence for $l_0$, we also have either $\rho_h^{l_0}\le \varepsilon$ or $\rho_h^{l_0}\le \rho_h$ for $1\le h\le H$. With loss of generality, we can assume $\left\|\hat{\theta}_h^{l_0}\right\|_2\le 1$, otherwise we can consider $\hat{\theta}_h^{l_0}\leftarrow\hat{\theta}_h^{l_0}/\|\hat{\theta}_h^{l_0}\|_2$, which will induce the same policy, and $\left\|\theta_h - \hat{\theta}_{h}^{l_0}\right\|_2$ will decrease under this operation. According to Hoeffding inequality and union bound, we have with probability at least $1 - HL\delta'$, for every $1\le l\le L$ and $1\le h\le H$,
	$$\left|\rho_h^l - \mathbb{E}^{\pi_l'}(\rho(s_h))\right|\le \sqrt{\frac{2\log(1/\delta')}{M_2}}.$$
	
	Hence according to Lemma \ref{lem2}, we have
	$$V_1^*(s_1) - V_1^{\pi_{l_0}'}(s_1)\le 2\sum_{h=1}^H\EE^{\pi_{l_0}'}[\rho(s_h)]\left\|\theta_h - \hat{\theta}_{h}^{l_0}\right\|_2.$$
	If we assume the above high-probability event all hold, then we have
	$$\text{RHS}\le 4H\sqrt{\frac{2\log(1/\delta')}{M_2}} + 2\sum_{h=1}^H\rho_h^{l_0}\left\|\theta_h - \hat{\theta}_{h}^{l_0}\right\|_2,$$
	where we use the fact that $\left\|\theta_h - \hat{\theta}_{h}^{l_0}\right\|_2\le 2$ since $\|\theta_h\|_2\le 1$ and $\|\hat{\theta}_{h}^{l_0}\|_2\le 1$ both hold. According to the condition that we do not enter the next `while' loop, the last term above can be upper bounded by
	$$2H\cdot \left(\varepsilon + \eta d + 2d\sqrt{\frac{2\log (1/\delta')}{M_1}} + d\sqrt{\frac{2\log (1/\delta')}{M_2}}\right).$$
	Therefore, we obtain that with probability at least $1 - d\delta' - 2HL\delta'$
	$$V_1^*(s_1) - V_1^{\pi_{l_0}'}(s_1)\le 2H\varepsilon + 2H\eta d + 4H\sqrt{\frac{2\log(1/\delta')}{M_2}} + 4Hd\sqrt{\frac{2\log (1/\delta')}{M_1}} + 2Hd\sqrt{\frac{2\log (1/\delta')}{M_2}}.$$
	\par Further again according to Hoeffding inequality, we know that with probability at least $1 - HL\delta$, we have
	$$\left|R_l - V_1^{\pi_{l_0}'}(s_0)\right|\le \sqrt{\frac{2\log (1/\delta')}{M_2}}.$$
	If we assume
	$$l_1 = \argmax_{l} R_l,$$
	then we have
	$$V_1^{\pi_{l_0}'}(s_0) - V_1^\pi(s_0) = V_1^{\pi_{l_0}'}(s_0) - V_1^{\pi_{l_1}'}(s_0) \le 2\sqrt{\frac{2\log(1/\delta')}{M_2}} + R_{l_0} - R_{l_1}\le 2\sqrt{\frac{2\log(1/\delta')}{M_2}}.$$
	\par Therefore, we have proved that if the `while' loop ends, then with probability at least $1 - d\delta' - 3HL\delta'$, the output policy $\pi$ satisfies that
	$$V_1^*(s_1) - V_1^\pi(s_1)\le 2H\varepsilon + 2H\eta d + (2 + 4H + 2Hd)\sqrt{\frac{2\log(1/\delta')}{M_2}} + 4Hd\sqrt{\frac{2\log (1/\delta')}{M_1}}.$$
	
	\paragraph{Bound on Iterations in `While' Loop} What left in this proof is to show that the `while' loop will terminate within some certain number of iterations.
	\par We notice that starting from the third iterations of the `while' loop, if we choose some $h$ which satisfies the loop condition, then the value of $\rho_h$ will at least be twice of the previous loop. And the value of $\rho_h$ will be at least $\varepsilon$ in order to enter the loop. Moreover, the value of $\rho_h$ will never exceed $1$ due to the boundedness assumption. Therefore, the loop will at most run
	$$1 + H\log_2\left(\frac{1}{\varepsilon}\right)$$
	times. 
	\par Therefore, choosing $\varepsilon = \frac{\epsilon}{8H}$, $\delta' = \frac{\delta}{(d + 3HL)(1 + H\log_2(1/\varepsilon))}$, $\eta = \frac{\epsilon}{8Hd}$, $M_2 = 2\log(1/\delta')\cdot \frac{16(2 + 4H + 2Hd)^2}{\epsilon^2}$ and $M_1 = 2\log(1/\delta')\cdot \frac{256H^2d^2}{\epsilon^2}$ and $L = \frac{1}{\eta}$, then our algorithm will output an $\epsilon$-optimal policy using at most
	$$\left(M_1 Hd + M_2HL\right)\cdot \left(1 + H\log_2\left(\frac{1}{\varepsilon}\right)\right) = \tilde{\mathcal{O}}\left(\frac{H^5d^3}{\epsilon^3}\right)$$
	number of samples.
\end{proof}

\end{document}